\documentclass{AAIML}

\usepackage{amsthm}
\usepackage{txfonts}
\usepackage[numbers]{natbib}
\usepackage{multicol}
\usepackage{graphicx}
\usepackage{ulem}
\usepackage{xcolor}

\newtheorem{prop}{Proposition}


\usepackage{fancyvrb,fvextra}
\usepackage{float}
\floatstyle{ruled}
\newfloat{code}{thp}{lop}
\newfloat{result}{tbp}{lop}
\floatname{code}{Code}
\floatname{result}{Result}

\journaltitle{Advances in Artificial Intelligence and Machine Learning; Research}

\aaimlheading{1}{2}{1-18}{23-07-11}{23-08-XX}{23-08-XX}{}
\ShortHeadings{Jeongseok Kim, et al.}{Jeongseok Kim, et al.}


\mycitation{Jeongseok Kim, et al.}{Dialogue Possibilities between a Human Supervisor and UAM Air Traffic Management: Route Alteration}{Advances in Artificial Intelligence and Machine Learning.}{2023;8(6):120.}{}

\begin{document}

\title{Dialogue Possibilities between a Human Supervisor and UAM Air Traffic Management: Route Alteration}

\author{\name Jeongseok Kim \email jeongseok.kim@sk.com \\
       \addr AIX\\
       SK Telecom\\
       Seoul, 04539, Republic of Korea
       \AND
       \name Kangjin Kim \email kangjinkim@cdu.ac.kr \\
       \addr Department of Drone Systems\\
       Chodang University\\
       Jeollanam-do, 58530, Republic of Korea}

\editor{Kangjin Kim}

\maketitle

\begin{abstract}

This paper introduces a novel approach to detour management in Urban Air Traffic Management (UATM) using knowledge representation and reasoning. It aims to understand the complexities and requirements of UAM detours, enabling a method that quickly identifies safe and efficient routes in a carefully sampled environment. This method implemented in Answer Set Programming uses non-monotonic reasoning and a two-phase conversation between a human manager and the UATM system, considering factors like safety and potential impacts. The robustness and efficacy of the proposed method were validated through several queries from two simulation scenarios, contributing to the symbiosis of human knowledge and advanced AI techniques. The paper provides an introduction, citing relevant studies, problem formulation, solution, discussions, and concluding comments.

\end{abstract}

\begin{keywords}
    UAM, UATM, KRR, Answer Set Programming, Articulating Agent
\end{keywords}

\section{Introduction}

Urban Air Mobility (UAM) has become a hot topic in the aviation industry due to technology and novel mobility options. However, the aviation industry's infrastructure is unprepared for this paradigm shift. The Korean Urban Air Mobility Concept of Operations \cite{KUAMConops10} by MOLIT describes it as a paradigm with unprecedented challenges, such as integrating low-altitude flights into dense urban environments, high-density air traffic management, and a transition to fully autonomous operations by 2035.
This mobility transition necessitates the participation of numerous stakeholders from diverse industries, resulting in a complex landscape devoid of defined data sharing mechanisms. The absence of standardization impedes UAM operations. Never before has it been more crucial to have an air traffic management system that can adapt to a heterogeneous environment and scale to accommodate growing data volume and complexity.

This investigation employs UAM Air Traffic Management (UATM) solutions to address these obstacles. We create a graph model of the UAM airway network. Each node in this concept is a "vertiport" — a vertical airport — and each connection represents a route between two adjacent vertiports. A human traffic manager supervises landing and departure operations at each vertiport and notifies the UATM system of any traffic issues.

The research presented in this paper offers an in-depth scenario illustrating the communication process of route change instructions to the relevant agents, consequently causing a modification in their currently charted routes. It is crucial to understand that the operational scope of each UATM system is not determined by its physical proximity to a vertiport, but is autonomously determined by the UATM itself. Given the constraints of communication range, this level of autonomy becomes increasingly significant. In some circumstances, a UATM might need to transmit instructions to certain agents via the UATM Network \cite{JS2022tbo}, as highlighted by our research.

This investigation aims to unravel the operational complexities inherent to the rapidly evolving UAM field. By doing so, our hope is to contribute to the development of a more robust, flexible, and scalable future urban air traffic management system. This system would be capable of accommodating a broad spectrum of stakeholders while addressing their specific data transfer needs, ultimately advancing the progress and reliability of Urban Air Mobility.

Combining with \cite{Kim2023Agent3C} and \cite{Woo2023} partly, this paper makes three contributions: first, a detour scenario; second, a theoretical enhancement to our method; and third, a newly introduced extra inquiry for an addition scenario.

We will begin with a review of the relevant literature, followed by a discussion of the problem formulation, a continuation of the primary solution, potential discussions, and a conclusion.

\section{Related Works}

The papers \cite{Reiche2018, Garrow2021, KUAMConops10, Marzouk2022} provide a comprehensive overview and challenges of UAM technology, regulatory context, potential benefits, and barriers.
The article \cite{FAAUAMConOps2} proposes a layered system approach in order to organize airspace for UAM vehicles, depending on the vehicle type, level of autonomy, and altitude.
Various ATM programs within German Aerospace Center (DLR) initiatives are discussed by the authors in \cite{Schuchardt2023}.
The authors in \cite{Kim2022} propose an assessment model for vehicle-obstacle collision hazards.
In detail, the authors of the paper \cite{PintoNeto2023} discuss cutting-edge Deep Learning techniques for ATM.


Even though previous investigations and foundations provided some solutions, including collision avoidances, it is clear that none of these works can be universally adapted due to the complexity of UATM systems, the diversity of stakeholders, the actions of massive agents, and the occurrence of unexpected aerial accidents.
In order to satisfy the needs of multiple stakeholders, we began our first step in \cite{Kim2023Agent3C} by developing a scenario for the detour of a particular corridor. 
Since non-monotonic reasoning is a type of logical reasoning concerned with the process of deriving conclusions from incomplete information, this paper employs non-monotonic reasoning to characterize route detours involving multiple UATMs.. In contrast to monotonic reasoning, in which the addition of new information to a knowledge base does not reduce the set of propositions that can be derived from that knowledge base, non-monotonic reasoning can result in the rejection of previous conclusions based on new information \cite{Reiter1988, sep-logic-nonmonotonic, sep-reasoning-defeasible}.
In the following paper \cite{Woo2023}, we showed an example scenario for changing the destination.
Considering the circumstances when a vertiport is momentarily closed, this paper illustrates the interactional procedures to reroute, expressing in Ontology relationships and rules as the predicates.

The research paper \cite{BorregoDiaz2022} is an excellent resource for the explanation of complex systems, also known as explainable AI (or XAI for short).
Their proposed epistemological model relies on knowledge representation and reasoning in particular to validate the complex system.
The article \cite{Bourguin2021} investigates a method for developing automatic classifiers capable of providing explanations based on an ontology.
This paper \cite{Ozaki2020} describes how to construct descriptive logic (DL) ontologies using five approaches based on association rule mining, formal concept analysis, inductive logic programming, computational learning theory, and neural networks. 
Despite the fact that these studies give a theoretical basis, using their ideas in our system has not proven to be a practical answer.

On the other side, the study \cite{Gebser2018} proposes an ASP-based paradigm for intra-logistics difficulties. The articles \cite{Gebser2018a, Nguyen2017} suggest using ASP to accomplish job assignment and vehicle routing for automated guided vehicles (AGVs). While they focus on demonstrating how to compute their route and provide a series of subtasks, we focus on the logic used in the process of interaction between a human manager and the system, as well as between systems.
There is a study \cite{9438207} for analyzing uncertainty in space object tracking by the Uncertainty Representation and Reasoning Evaluation Framework (URREF). URREF is an ontology that provides a common vocabulary for representing and reasoning about uncertainty. This shows how URREF can be used to model the uncertainty in the tracking process, and how this uncertainty can be used to assess the veracity, precision, and recall of the tracking results.

\section{Problem Formulation}

Let us explore a network of vertiports, with some of these vertiports being adjacent and interconnected to form a corridor, as depicted in Fig.~\ref{fig:uatm_network}. 

\begin{figure}
	\centering
	\includegraphics[width=0.85\linewidth]{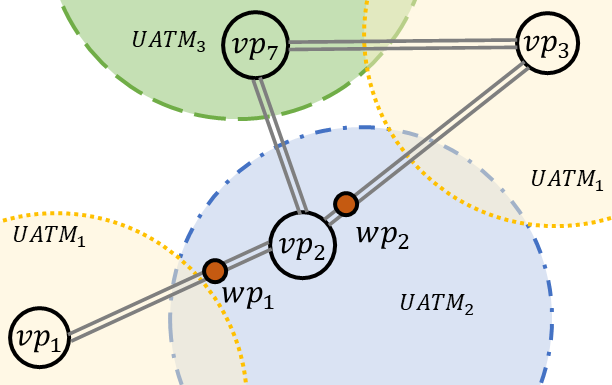}
	\caption{UATM Network, which consists of four vertiports, $vp_1$, $vp_2$, $vp_3$, and $vp_7$, bidirectionally connecting corridors between adjacent vertiports, three UATMs of $UATM_1$, $UATM_2$, and $UATM_3$, and their coverage represented with outer circles.}
	\label{fig:uatm_network}
\end{figure}

Vertiports are linked by corridors. UAM Air Traffic Management (UATM) systems manage the extensive circuits of these vertiports. Agents in this network migrate between vertiports via these corridors. UATMs can communicate directly with agents in their region. The "UATM Network" is a communication relaying mechanism that enables UATMs to exchange messages for this investigation. When the agent leaves the UATM service region, however, a direct connection cannot be established.

The itinerary for an agent's trip follows: Initially, it is positioned in a vertiport, preparing for takeoff. This agent asks permission to take off from the vertiport's traffic manager. As soon as he or she allows the agent to take the desired action, the aircraft ascends into the airspace.

Before flying, we presume that the destination is already configured and that the route has been computed.  In order arrive at the destination vertiport, the agent communicates with UATMs to share its status, including its current velocity, GPS coordinates, and other important information.  Then, UATMs are able to track the agent while taking into account the total traffic situation.

The agent requests the vertiport's traffic management to land the aircraft as it approaches. After approval, the traffic supervisor leads it to the vertipad.

This traffic system operates without human intervention. There is a human traffic manager at each vertiport. In addition to observing traffic and the vertiport environment, they interact with legacy traffic systems. Therefore, human managers increase the system's adaptability.

\subsection{Scenario 1: Reroute the Corridor}
During the process of monitoring vehicles landing and taking off for a while, the human manager in vertiport 3 ($vp_3$ for short) detected delays. It is anticipated that some collisions will occur if these delays are aggregated and transmitted to the corridors. Consequently, he or she reported this incident to the UATM (in this case, $UATM_1$). In particular, the corridor between $vp_2$ and $vp_3$ is so congested that agents are required to avoid using it. This results in agents who are en route to $vp_3$ having to detour. Alternative route consists of $vp_1$, $vp_2$, $vp_7$, and $vp_3$ vertiports.

From the perspective of $UATM_1$, this request from the manager demands extended effort. Once it gets a detour request from the manager of $vp_3$, it must locate all forthcoming agents for $vp_3$. Then, it picks $vp_2$-bound agents and delivers them a new route consisting of vertiports $vp_1$, $vp_2$, $vp_7$, and $vp_3$. Additionally, $UATM_1$ should determine if any agents are not covered. If such agents are there, they should query the entire UATMs to locate the appropriate UATMs. Suppose, for instance, that agent 3 is present at waypoint 1 ($wp_1$). Then, $UATM_2$ will reply via the network to the query. In this instance, $UATM_1$ will request that $UATM_2$ relay the new route to agent 3. After requesting that this detour message be relayed, $UATM_1$ will await $UATM_2$'s response regarding agent 3's revised plan. Once $UATM_1$ determines that its plan has been updated, it can answer $vp_3$'s manager. 
Fig.~\ref{fig:detour_scenario_1} depicts the scenario to facilitate comprehension.

\begin{figure}
	\centering
	\includegraphics[width=0.85\linewidth]{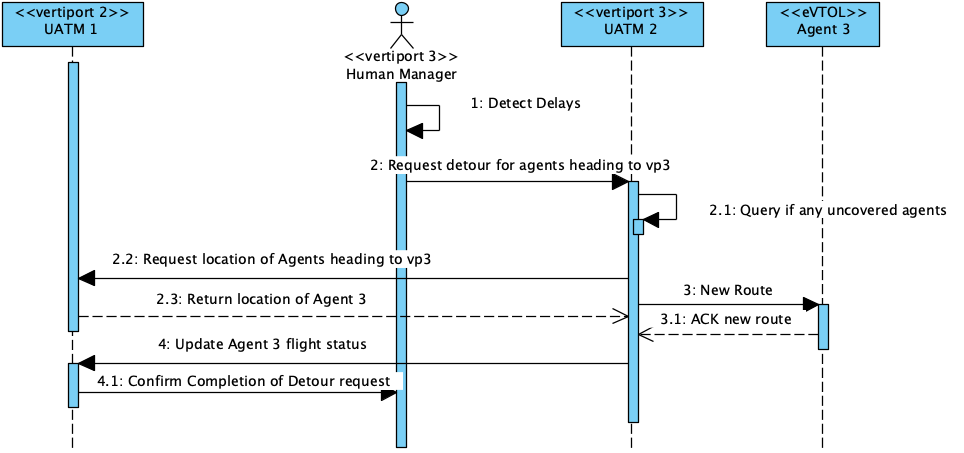}
	\caption{Re-route corridor scenario among human manager, UATMs, and agents}
	\label{fig:detour_scenario_1}
\end{figure}

\subsection{Scenario 2: Clearing the Corridor}
Now we consider a special case. While communicating with the uncovered agents and sending the detour request, a manager in $vp_2$ found that an agent was missed. That is, agent 7 was supposed to be detoured before getting into the corridor from $vp_2$ to $vp_3$. However, the $vp_2$ manager found agent 7 at waypoint 2 (or $wp_2$ for short) which is in that corridor, just before changing its route. In this case, agent 7 may be unable to reduce its velocity in time, colliding with agents ahead of it. Hence, he or she immediately reported this status to $UATM_2$. A manager in $vp_3$ requests clearing that corridor, adding to these approaching agents a round trip which is a sequence of vertiports: $vp_3$, $vp_7$, and $vp_3$.

\section{Solution}
In this section, we use Answer Set Programming to declare the problem \cite{Lifschitz1999, Baral03, Lifschitz2008}. 
ASP is comprised of a collection of logical principles that define the relationships between objects and their properties. In this context, a "answer set" is a collection of logical statements that satisfies all the program's principles and constraints \cite{gel88}. These answer sets, which represent solutions to the modeled problem, are discovered by ASP solvers \cite{GebserKKS17}.
We do this by breaking the problem down into a series of smaller questions, and then using non-monotonic reasoning to answer these questions. Once we have specified the problem, we can check if a given mission is satisfiable. If it is not, we can use the results of the non-monotonic reasoning to identify the factors that will cause the mission to fail.

\subsection{A Common Setting}
There is a shared setting for the entire solution. Due to page limitations, the Code~\ref{code:env_info}~$\sim$~\ref{code:agent_info2} are included as an appendix. This code will be executed before any other code.

\subsection{Basic Queries}
\subsubsection{Find all covered agents by $UATM_1$}
Here, we identify all agents covered by $UATM_1$. It is important to note that in Code~\ref{code:query_01}, we omitted codes for agents traveling through the step on purpose%
\footnote{These agents will be covered by the following advanced queries.}. %
 The code consists of two heads of rules. The first is \texttt{covered\_agent} and the second is \texttt{covered\_by\_uatm1}. In the second head \texttt{covered\_by\_uatm1}, we can pick the agents who are only covered by $UATM_1$. As shown in Result~\ref{res:query_01}, $UATM_1$ covers agents 1, 2, 4, and 5.

\begin{code}
	\begin{Verbatim}[breaklines,fontsize=\footnotesize]
covered_agent(A, TM) :- loc(A, T, U, V, WP), covered_wp(U, V, TM, WP).
covered_by_uatm1(A) :- covered_agent(A, 1).

#show loc/5.
#show covered_by_uatm1/1.
	\end{Verbatim}
	\caption{Find all agents that are covered by $UATM_1$}
	\label{code:query_01}
\end{code}

\begin{result}
	\begin{Verbatim}[breaklines,fontsize=\footnotesize]
$ clingo env_info.lp agent_info1.lp query01.lp
clingo version 5.6.2
Reading from env_info.lp ...
Solving...
Answer: 1
loc(3,1,1,2,19) loc(4,1,1,2,16) loc(1,1,1,2,1) loc(5,1,1,2,4) loc(6,1,1,2,2) loc(2,1,1,2,11) covered_by_uatm1(1) covered_by_uatm1(2) covered_by_uatm1(5) covered_by_uatm1(6)
SATISFIABLE

Models       : 1+
Calls        : 1
Time         : 0.000s (Solving: 0.00s 1st Model: 0.00s Unsat: 0.00s)
CPU Time     : 0.000s
	\end{Verbatim}
	\caption{All agents covered by $UATM_1$}
	\label{res:query_01}
\end{result}

\subsubsection{Change the route for covered, $vp_3$ heading agents}
The purpose of this query is to modify the route for the subsequent agents:
\begin{itemize}
	\item they are within the $UATM_1$'s coverage,
	\item their original plan was to pass the exclusive edge,
	\item their target is the $vp_3$, and
	\item they are currently on the edge between $vp_1$ and $vp_2$, which indicates that they have not yet visited the exclusive edge.
\end{itemize}

Related to this query, we presume six agents exist.
We provide the initial plan, \texttt{plan}, for these six agents, which consists of a sequence of vertiports $vp_1$, $vp_2$, and $vp_3$.
This is shown in the two lines of Code~\ref{code:agent_info1} for \texttt{plan}.
Then, we provide a new plan, \texttt{new\_plan}, which consists of the vertiports $vp_1$, $vp_2$, $vp_7$, and $vp_3$.
The first value 2 of \texttt{new\_plan} in Code~\ref{code:query_02} does not specify an agent id, but rather a step number.
It indicates that in step 2, the new plan should be adapted for a subset of agents.
Prior to entering the corridor between $vp_2$ and $vp_3$, the head \texttt{detour\_request} verifies that the covered agents' target is $vp_3$ after locating them.
Then, for the specific agent, the \texttt{detour\_request} is created for the one-time step forward.
Once one step of time has passed, agents with the \texttt{detour\_request} replace their plan with the \texttt{new\_plan}.
According to Result~\ref{res:query_02}, four agents have the detour request, and they have all altered their routes.
Since \texttt{SATISFIABLE} is returned, we know that \texttt{detour\_request} is executed.

\begin{code}
	\begin{Verbatim}[breaklines,fontsize=\footnotesize]
new_plan(2, 1, 2).
new_plan(2, 2, 7).
new_plan(2, 7, 3).

plan(A, T+1, U, V) :- plan(A, T, U, V), step(T+1), not detour_request(A, T+1).
plan(A, T+1, U1, V1) :- plan(A, T, U, V), step(T+1), new_plan(T+1, U1, V1), detour_request(A, T+1).

covered_agent(A, TM) :- loc(A, T, U, V, WP), covered_wp(U, V, TM, WP).
covered_by_uatm1(A) :- covered_agent(A, 1).

detour_request(A, T+1) :- covered_by_uatm1(A), plan(A, T, U, V), plan(A, T, 2, 3), target(A, 1, 3), edge_range(1, 2, P), loc(A, T, 1, 2, P), not step(T-1).

change_route(A, T) :- new_plan(T, U, V), plan(A, T, U, V), detour_request(A, T).
:- not change_route(A, T), new_plan(T, U, V), detour_request(A, T).

#show detour_request/2.
#show change_route/2.
#show loc/5.
	\end{Verbatim}
	\caption{Change the route for covered, $vp_3$ heading agents}
	\label{code:query_02}
\end{code}

\begin{result}
	\begin{Verbatim}[breaklines,fontsize=\footnotesize]
$ clingo env_info.lp agent_info1.lp query02.lp
clingo version 5.6.2
Reading from env_info.lp ...
Solving...
Answer: 1
loc(1,1,1,2,1) loc(3,1,1,2,3) loc(5,1,1,2,5) loc(2,1,1,2,10) loc(6,1,1,2,18) loc(4,1,1,2,19) detour_request(5,2) detour_request(3,2) detour_request(2,2) detour_request(1,2) change_route(5,2) change_route(3,2) change_route(2,2) change_route(1,2)
SATISFIABLE

Models       : 1+
Calls        : 1
Time         : 0.015s (Solving: 0.00s 1st Model: 0.00s Unsat: 0.00s)
CPU Time     : 0.000s
	\end{Verbatim}
	\caption{All covered agents' plans are renewed}
	\label{res:query_02}
\end{result}

\subsection{Advanced Queries}
\subsubsection{Find all the unreachable with $UATM_1$, but $vp_3$ heading agents}
We can search covered agents by $UATM_1$ using the rules' head \texttt{covered\_by\_uatm1}, according to Code~\ref{code:query_03}, 
We can gather the covered agents from this head.
We can identify agents that have been uncovered by comparing these agents to all agents located in the corridor between $vp_1$ and $vp_2$
The remaining agents for the head of rule \texttt{uncovered\_by\_uatm1} are listed in \ref{code:query_03}.
The Result~\ref{res:query_03} shows that three agents have been uncovered: agent 3, agent 5, and agent 6.
\begin{code}
	\begin{Verbatim}[breaklines,fontsize=\footnotesize]
covered_agent(A, TM) :- loc(A, T, U, V, WP), covered_wp(U, V, TM, WP).
uncovered_by_uatm1(A) :- not covered_agent(A, 1), loc(A, T, 1, 2, _), plan(A, T, 2, 3), target(A, T, 3).

#show loc/5.
#show uncovered_by_uatm1/1.
	\end{Verbatim}
	\caption{Find all unreachable agents by $UATM_1$}
	\label{code:query_03}
\end{code}

\begin{result}
	\begin{Verbatim}[breaklines,fontsize=\footnotesize]
$ clingo env_info.lp agent_info1.lp query03.lp
clingo version 5.6.2
Reading from env_info.lp ...
Solving...
Answer: 1
loc(1,1,1,2,1) loc(2,1,1,2,8) loc(3,1,1,2,16) loc(4,1,1,2,2) loc(5,1,1,2,19) loc(6,1,1,2,17) uncovered_by_uatm1(3) uncovered_by_uatm1(5) uncovered_by_uatm1(6)
SATISFIABLE

Models       : 1+
Calls        : 1
Time         : 0.007s (Solving: 0.00s 1st Model: 0.00s Unsat: 0.00s)
CPU Time     : 0.000s
	\end{Verbatim}
	\caption{All unreachable agents by $UATM_1$}
	\label{res:query_03}
\end{result}

\subsubsection{Change the route for all $vp_3$ heading agents}
Code~\ref{code:query_04} contains two distinct heads of the rules for \texttt{detour\_request}.
One is for agents covered by $UATM_1$ and the other is for agents unreachable by $UATM_1$.
As demonstrated in Result~\ref{res:query_04}, all covered and uncovered agents are specified, followed by their detour requests and their route modifications at time step 2.

\begin{code}
	\begin{Verbatim}[breaklines,fontsize=\footnotesize]
new_plan(2, 1, 2).
new_plan(2, 2, 7).
new_plan(2, 7, 3).

plan(A, T+1, U, V) :- plan(A, T, U, V), step(T+1), not detour_request(A, T+1).
plan(A, T+1, U1, V1) :- plan(A, T, U, V), step(T+1), new_plan(T+1, U1, V1), detour_request(A, T+1).

covered_agent(A, TM) :- loc(A, T, U, V, WP), covered_wp(U, V, TM, WP).
covered_by_uatm1(A) :- covered_agent(A, 1).
uncovered_by_uatm1(A) :- not covered_agent(A, 1), loc(A, T, 1, 2, _), plan(A, T, 2, 3), target(A, 1, 3).
covered(A, T, TM) :- loc(A, T, U, V, WP), uncovered_by_uatm1(A), covered_wp(U, V, TM, WP).

detour_request(A, T+1) :- covered_by_uatm1(A), plan(A, T, U, V), plan(A, T, 2, 3), target(A, 1, 3), edge_range(1, 2, P), loc(A, T, 1, 2, P), not step(T-1).
detour_request(A, T+1) :- covered(A, T, TM), plan(A, T, U, V), plan(A, T, 2, 3), target(A, 1, 3), edge_range(1, 2, P), loc(A, T, 1, 2, P), not step(T-1).

change_route(A, T) :- new_plan(T, U, V), plan(A, T, U, V), detour_request(A, T).
:- not change_route(A, T), new_plan(T, U, V), detour_request(A, T).

#show covered_by_uatm1/1.
#show uncovered_by_uatm1/1.
#show detour_request/2.
#show change_route/2.
#show loc/5.
	\end{Verbatim}
	\caption{Change the route for all $vp_3$ heading agents}
	\label{code:query_04}
\end{code}

\begin{result}
	\begin{Verbatim}[breaklines,fontsize=\footnotesize]
$ clingo env_info.lp agent_info1.lp query04.lp
clingo version 5.6.2
Reading from env_info.lp ...
Solving...
Answer: 1
loc(3,1,1,2,1) loc(5,1,1,2,5) loc(1,1,1,2,6) loc(2,1,1,2,9) loc(4,1,1,2,18) loc(6,1,1,2,19) covered_by_uatm1(1) covered_by_uatm1(2) covered_by_uatm1(3) covered_by_uatm1(5) detour_request(6,2) detour_request(5,2) detour_request(4,2) detour_request(3,2) detour_request(2,2) detour_request(1,2) uncovered_by_uatm1(4) uncovered_by_uatm1(6) change_route(6,2) change_route(5,2) change_route(4,2) change_route(3,2) change_route(2,2) change_route(1,2)
SATISFIABLE

Models       : 1+
Calls        : 1
Time         : 0.016s (Solving: 0.00s 1st Model: 0.00s Unsat: 0.00s)
CPU Time     : 0.000s
	\end{Verbatim}
	\caption{All $vp_3$ heading agents' plans are renewed}
	\label{res:query_04}
\end{result}

\subsubsection{Append a round detour for agents ahead of agent 7}

Code~\ref{code:query_05} gathers \texttt{ahead\_agents}. Then, it categorizes these agents to \texttt{covered\_by\_uatm2} and to \texttt{covered\_by\_other}. For each category, it sends \texttt{round\_request} for adding a round trip to the end of these agents' plan. Once the new plan is added, it can complete the \texttt{round\_route} mission.
The Result~\ref{res:query_05} shows that agent 8$\sim$12 had \texttt{round\_request}, and then finally made \texttt{round\_route}.
\begin{code}
    \begin{Verbatim}[breaklines,fontsize=\footnotesize]
new_plan(3, 3, 7).
new_plan(3, 7, 3).

ahead_agents(A, T) :- loc(A, T, U, V, WP), loc(7, T, U, V, WP2), WP > WP2.

covered_agent(A, TM) :- ahead_agents(A, T), loc(A, T, U, V, WP), covered_wp(U, V, TM, WP).
covered_by_uatm2(A) :- covered_agent(A, 2).
covered_by_other(A) :- not covered_agent(A, 2), ahead_agents(A, T), covered_agent(A, TM).

round_request(A, V, T+1) :- covered_by_uatm2(A), ahead_agents(A, T), target(A, T, V), step(T+1).
round_request(A, V, T+1) :- covered_by_other(A), ahead_agents(A, T), target(A, T, V), step(T+1).

plan(A, T+1, U, V) :- ahead_agents(A, T), plan(A, T, U, V), step(T+1).

plan(A, T, U, V) :- round_request(A, V, T), new_plan(T, U, V).
plan(A, T, V, U) :- round_request(A, V, T), new_plan(T, V, U).
target(A, T, V) :- round_request(A, V, T), plan(A, T, U, V).

round_route(A, V, T) :- round_request(A, V, T), plan(A, T, U, V), plan(A, T, V, U).
:- not round_route(A, V, T+1), ahead_agents(A, T), round_request(A, V, T+1), step(T+1).

#show covered_by_uatm2/1.
#show covered_by_other/1.
#show round_request/3.
#show round_route/3.
    \end{Verbatim}
    \caption{Append a round detour for agents ahead of agent 7}
    \label{code:query_05}
\end{code}

\begin{result}
    \begin{Verbatim}[breaklines,fontsize=\footnotesize]
$ clingo env_info.lp agent_info2.lp query05.lp
clingo version 5.6.2
Reading from env_info.lp ...
Solving...
Answer: 1
covered_by_other(9) covered_by_other(10) covered_by_other(11) covered_by_other(12) covered_by_uatm2(8) loc(7,2,2,3,2) loc(8,2,2,3,8) loc(9,2,2,3,9) loc(10,2,2,3,10) loc(11,2,2,3,11) loc(12,2,2,3,12) round_request(9,3,3) round_request(10,3,3) round_request(11,3,3) round_request(12,3,3) round_request(8,3,3) round_route(9,3,3) round_route(10,3,3) round_route(11,3,3) round_route(12,3,3) round_route(8,3,3)
SATISFIABLE

Models       : 1
Calls        : 1
Time         : 0.025s (Solving: 0.01s 1st Model: 0.00s Unsat: 0.01s)
CPU Time     : 0.000s
    \end{Verbatim}
    \caption{Plans for all agents ahead of agent 7 are renewed}
    \label{res:query_05}
\end{result}

\section{Discussion}
%
%
%
In this section, we discuss present progress, limitations, and future directions.

\subsection{Nonmonotonicity}
Through the program for the first scenario, the initial background knowledge given is that the corridor from $vp_2$ to $vp_3$ is crowded, and the focused agents are in the corridor from $vp_1$ to $vp_2$. In order to illustrate the nature of nonmonotony, agents' locations can vary. We modeled the various locations for these agents so that without considering the coverage of each UATM, they could be properly handled. Code~\ref{code:agent_info1} declares the agent's location through the choice rule for the \texttt{loc} and its following rules. These rules allow agents to be aligned evenly in different situations. With this arrangement of the agents' locations, all the corresponding rules in the queries are properly declared in order to cover general situations.

\subsection{Explainability}
The query asking that `\textbf{Change the route for all $vp_3$ heading agents}' is successful is expressed as the predicate \texttt{change\_route} and its supporting rule just followed by the predicate.
The answer demonstrates that \texttt{change\_route} is true when it is satisfiable.
In this query, the validation of the explanation is checked by the combination of the predicate, \texttt{change\_route}, which is regarded as a fact when the body of the rule is true, and the safe rule, which ensures the fact's consistency.
Assuming that all the derived rules and relationships lead to the body of the rule as true, the logical consequence makes the predicate \texttt{change\_route} by also being connected, and this justifies the answer to the query.

We observe that the explanation is somewhat abstract based on the predicates we declared.
In the second scenario, we only considered the given rules and facts, omitting the process by which the $vp_2$ could miss the agent.
In this regard, additional research must be conducted, and a more comprehensive explanation is desired.
We note that in order to validate our technique, a theoretical foundation must be addressed. Due to page constraints, we included this in the appendix.

\section{Conclusions} 
\label{sec:conclusion}

We have enumerated two UATM-related scenarios. We have implemented knowledge representation and reasoning within the framework of Answer Set Programming by employing it as an instrument for articulating system explanations. The paper then examines our current progress, prospective ambitions, and ultimate goals in this context.

\section*{Acknowledgments}
This work is supported by the Korea Agency for Infrastructure Technology Advancement(KAIA) grant funded by the Ministry of Land, Infrastructure and Transport (Grant RS-2022-00143965).

\bibliographystyle{unsrtnat}
\bibliography{references}

\begin{appendix}
\section{Common Settings}
\subsection{Information about the Environment}
\begin{code}
	\begin{Verbatim}[breaklines,fontsize=\footnotesize]
% common settings
%    initial information for environment

uatm(1..3). agent(1..20). vp(1..7).

% edge(VP_u, VP_v) :
%    there is a corridor from VP_u to VP_v
edge(1, 2). edge(2, 3). edge(2, 7). edge(7, 3).

% cover(UATM_i, VP_u) :
%    UATM_i coveres VP_u
cover(1, 1). cover(1, 3).
cover(2, 2).
cover(3, 7).

% edge_range(VP_i, VP_j, P) :
%    corridor from VP_i to VP_j has range P.
edge_range(1, 2, 1..20).
edge_range(2, 3, 1..13).
edge_range(2, 7, 1..22).

% covered_wp(VP_u, VP_v, UATM_i, P) :
%    UATM_i covers a corridor from VP_u to VP _v within the range P.
covered_wp(1, 2, 1, P) :- edge_range(1, 2, P), P < 16.
covered_wp(1, 2, 2, P) :- edge_range(1, 2, P), 7 <= P.
covered_wp(2, 3, 1, P) :- edge_range(2, 3, P), 9 <= P.
covered_wp(2, 3, 2, P) :- edge_range(2, 3, P), P < 9.
covered_wp(2, 7, 2, P) :- edge_range(2, 7, P), P < 8.
covered_wp(2, 7, 3, P) :- edge_range(2, 7, P), 20 <= P.

step(1..3).
	\end{Verbatim}
	\caption{Information about the Environment}
	\label{code:env_info}
\end{code}
\pagebreak
\subsection{Information about the Agents for the first four queries}
\begin{code}
	\begin{Verbatim}[breaklines,fontsize=\footnotesize]
% common settings :
%    initial information for agents

% loc(AGENT_a, STEP_t, VP_i, VP_j, WP_p) :
%    At STEP_t, AGENT_a locates at WP_p on the corridor between VP_i and VP_j.
1{loc(A, 1, 1, 2, WP): edge_range(1, 2, WP)}1 :- agent(A), A <= 6.
:- loc(A1, 1, 1, 2, WP), loc(A2, 1, 1, 2, WP), A1 != A2.

% uatm1_wps(WP_p) :
%    WP_p is the waypoint in the corridor from vp1 to vp2, and uatm1 covers the waypoint.
uatm1_wps(WP) :- covered_wp(1, 2, 1, WP1), covered_wp(1, 2, 2, WP2), edge_range(1, 2, WP), WP != WP2, WP == WP1.

% uatm2_wps(WP_p) :
%    WP_p is the waypoint in the corridor from vp1 to vp2, and uatm2 covers the waypoint.
uatm2_wps(WP) :- covered_wp(1, 2, 1, WP1), covered_wp(1, 2, 2, WP2), edge_range(1, 2, WP), WP != WP1, WP == WP2.

% uatm1_2_both(WP_p) :
%    WP_p is the waypoint in the corridor from vp1 to vp2, and both uatm1 and uatm2 cover the waypoint.
uatm1_2_both(WP) :- covered_wp(1, 2, 1, WP1), covered_wp(1, 2, 2, WP2), edge_range(1, 2, WP), WP == WP1, WP == WP2.

% u1_only(N):
%    N is the number of waypoints in uatm1_wps(WP).
u1_only(N) :- N = #count{A:uatm1_wps(WP), not uatm2_wps(WP), loc(A, 1, 1, 2, WP), agent(A)}.

% u2_only(N):
%    N is the number of waypoints in uatm2_wps(WP).
u2_only(N) :- N = #count{A:uatm2_wps(WP), not uatm1_wps(WP), loc(A, 1, 1, 2, WP), agent(A)}.

% u1_2_both(N):
%    N is the number of waypoints in both uatm1_wps(WP) and uatm2_wps(WP).
u1_2_both(N) :- N = #count{A:uatm1_wps(WP), uatm2_wps(WP), loc(A, 1, 1, 2, WP), agent(A)}.

% focused_agent_number(N):
%    N is the number of agents that locate at the corridor from vp1 to vp2.
focused_agent_number(N) :- N = #count{A: loc(A, 1, 1, 2, WP), agent(A), edge_range(1, 2, WP)}.
:- u1_only(N), focused_agent_number(N).
:- u2_only(N), focused_agent_number(N).
:- u1_2_both(N), focused_agent_number(N).
:- u1_2_both(N), N == 0.
:- u1_only(N), N = 0.
:- u2_only(N), N = 0.

% plan(AGENT_a, STEP_t, VP_i, VP_j) :
%    At STEP_t, AGENT_a has a part of plan to move from VP_i to VP_j.
plan(A, 1, 1, 2) :- agent(A), 1 <= A, A <= 6.
plan(A, 1, 2, 3) :- agent(A), 1 <= A, A <= 6.

% we assume that every plan is acyclic.
source(A, 1, U) :- agent(A), plan(A, 1, U, V), not plan(A, 1, _, U).
target(A, 1, V) :- agent(A), plan(A, 1, U, V), not plan(A, 1, V, _).
	\end{Verbatim}
	\caption{Information about the Agents for the first four queries}
	\label{code:agent_info1}
\end{code}
\pagebreak
\subsection{Information about the Agents for the last query}
\begin{code}
	\begin{Verbatim}[breaklines,fontsize=\footnotesize]
% common settings :
%    initial information for a missed agent and agents that locate at the condensed corridor

% loc(AGENT_a, STEP_t, VP_i, VP_j, WP_p) :
%    At STEP_t, AGENT_a locates at WP_p on the corridor between VP_i and VP_j.
loc(7, 2, 2, 3, 2).
loc(8, 2, 2, 3, 8).
loc(9, 2, 2, 3, 9).
loc(10, 2, 2, 3, 10).
loc(11, 2, 2, 3, 11).
loc(12, 2, 2, 3, 12).

% plan(AGENT_a, STEP_t, VP_i, VP_j) :
%    At STEP_t, AGENT_a has a part of plan to move from VP_i to VP_j.
plan(A, 2, 1, 2) :- agent(A), 7 <= A, A <= 12.
plan(A, 2, 2, 3) :- agent(A), 7 <= A, A <= 12.

source(A, 2, U) :- agent(A), plan(A, 2, U, V), not plan(A, 2, _, U).
target(A, 2, V) :- agent(A), plan(A, 2, U, V), not plan(A, 2, V, _).
	\end{Verbatim}
	\caption{Information about the Agents for the last query}
	\label{code:agent_info2}
\end{code}

\section{Validation Analysis}
\subsection{Validating the answer using graph reachability}
In this section, we provide a theoretical foundation for our approach. Particularly, we prove that the answer returned from ASP is valid using the graph reachability technique.

\begin{prop}
	Every answer returned from ASP is valid.
\end{prop}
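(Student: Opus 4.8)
The plan is to give the informal claim a precise meaning and then reduce it to a checkable property of the vertiport graph. First I would fix the directed graph $G=(V,E)$ induced by the common setting, where $V$ is the set of vertiports declared by \texttt{vp/1} and $E=\{(U,V): \texttt{edge}(U,V)\}$ collects the corridors. For an agent $A$ at time $T$, its plan is the set $\{(U,V): \texttt{plan}(A,T,U,V)\}$, and I would call such a plan \emph{valid} when (i) it forms a directed walk in $G$ -- every pair in it is an edge of $G$, and the pairs chain from the agent's \texttt{source} to its \texttt{target} -- and (ii) it meets the scenario's side condition (in Scenario~1 the walk avoids the excluded corridor $(2,3)$; in Scenario~2 it contains the round segment $vp_3\to vp_7\to vp_3$ appended at the target). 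The proposition then reads: in every answer set of each query program, the revised plan of every affected agent is valid in this sense.

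The argument splits into a static part and a semantic part. The static part is the observation that every edge a reroute can inject is genuine: the \texttt{new\_plan} segments $(1,2),(2,7),(7,3)$ of Scenario~1 and $(3,7),(7,3)$ of Scenario~2 all appear among the \texttt{edge/2} facts of the common setting, so no rewrite can create a non-existent corridor. The semantic part is where the stable-model semantics does the work. I would argue that the two integrity constraints -- the one forbidding \texttt{detour\_request} without a matching \texttt{change\_route}, and its Scenario~2 analogue guarding \texttt{round\_route} -- eliminate every candidate model in which a requested reroute fails to take effect. Hence, in any surviving answer set, wherever \texttt{detour\_request(A,T+1)} (respectively \texttt{round\_request}) is derived, the rule whose body carries \texttt{new\_plan(T+1,U1,V1)} writes exactly the new edges into the agent's plan, while the complementary inertia rule copies over the untouched edges.

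Granting these two parts, validity follows by a short reachability induction on the time steps $T=1,\dots,3$. At each step I would show that the set of \texttt{plan} pairs in the answer set stays closed under the edge relation and that the injected \texttt{new\_plan} pairs splice into the old walk precisely at the waypoint preceding the excluded corridor -- this is forced by the body literals \texttt{plan(A,T,2,3)} and \texttt{loc(A,T,1,2,P)} in the request rule -- so the spliced sequence is again a single directed walk whose final edge ends at $vp_3=\texttt{target}$. The side condition is then immediate: after the rewrite the pair $(2,3)$ is absent in Scenario~1, and the round pair is present in Scenario~2.

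The main obstacle I anticipate is not the graph induction but the faithful translation of the declarative stable-model semantics into this operational walk picture. In particular, I must justify that the non-monotonic frame/inertia rules together with the integrity constraints pin down a \emph{unique} consistent plan per agent at each step, so that ``the revised plan'' is well defined rather than merely supported; the default-negation guard \texttt{not step(T-1)}, which arms the request only at the first step, and the need to rule out any answer set that satisfies the constraints while still retaining a stale $(2,3)$ edge, are the delicate points. I would discharge these by appealing to the foundedness and minimality of stable models together with the exhaustiveness of the integrity constraints.
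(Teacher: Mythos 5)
Your plan is a genuinely different argument from the one in the paper, and the difference is worth spelling out. The paper's proof never touches the vertiport graph, the \texttt{new\_plan} facts, or the integrity constraints: it works one level up, defining $G$ as the graph of reachable \emph{ASP states}, $S$ as the states satisfying the goal, and $T$ as the states returned by the solver, and then argues $T \subset S$ in three lines --- essentially by appealing to the soundness of the ASP solver (indeed, its middle step ``since $s \in T$, it must satisfy the goal'' takes for granted the very inclusion being proved, and its final bullet actually states the converse inclusion $S \subset T$). You instead give ``valid'' a domain-specific meaning --- the revised \texttt{plan} atoms form a directed walk in the graph induced by \texttt{edge/2}, splice in the \texttt{new\_plan} edges at the right waypoint, and avoid the closed corridor --- and propose to verify it by a static check on the injected edges, an appeal to the integrity constraints under stable-model semantics, and an induction over time steps. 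What each buys: the paper's argument is short and program-independent, but it only restates solver correctness and says nothing about whether the encoding captures the intended rerouting semantics; your argument is longer and must confront the delicate points you rightly flag (uniqueness of the revised plan, the \texttt{not step(T-1)} guard, ruling out stale $(2,3)$ edges), but if completed it would establish the substantive claim --- that the answer sets of \emph{these particular programs} describe legal detours --- which is precisely what the paper's abstract reachability argument leaves unexamined.
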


\begin{proof}
	Without loss of generality, define $G$, $S$, and $T$ as follows:
	\begin{itemize}
		\item Let $G$ be the graph of reachable states in ASP,
		\item Let $S$ be the set of states that satisfy the goal, and
		\item Let $T$ be the set of states that are returned by the ASP solver.
	\end{itemize}
	We want to show that $T \subset S$.
	\begin{itemize}
		\item For any state $s \in T$, there exists a path $p$ from the initial state to $s \in G$.
		\item Since $s \in T$, it must satisfy the goal.
		\item Therefore, if any state $s$ that is reachable from the initial state in $G$ and satisfies the goal, then $s \in T$. 
	\end{itemize}
	This shows that $T \subset S$. In other words, all of the states that are returned by the ASP solver are valid solutions to the problem.
\end{proof}

We remark that our desired solutions are in a subset of $T$.
In addition, it is worth noting that this technique is similar to proving the validation in model checking. Both involve the construction of a graph of all possible states that the system can reach and then searching the graph for a state that satisfies the goal. Despite the fact that we used graph reachability analysis, the computational technique used in the building of many answer set solvers is an improvement to the DPLL algorithm \cite{Davis1960, Davis1962rohtua, Bayardo1998, Gomes2000HeavyTailedPI}.

\end{appendix}

\end{document}